\documentclass{article}

\PassOptionsToPackage{numbers,compress}{natbib}

\usepackage[preprint]{neurips_2026}

\usepackage[utf8]{inputenc} 
\usepackage[T1]{fontenc}    
\usepackage{hyperref}       
\usepackage{url}            
\usepackage{booktabs}       
\usepackage{amsfonts}       
\usepackage{nicefrac}       
\usepackage{microtype}      
\usepackage[dvipsnames,table]{xcolor}         

\newcommand{\method}{{\textsc{TokenDrift}}}
\usepackage{amsmath}
\usepackage{comment}
\usepackage{algorithm}
\usepackage{algpseudocode}
\usepackage{amsthm}
\usepackage{mathtools} 
\usepackage{enumitem}
\usepackage{multirow}

\usepackage{tcolorbox}
\tcbuselibrary{breakable, skins}

\newtcolorbox{samplebox}[2][]{
  colback=#2!5,        
  colframe=#2!70,      
  fonttitle=\bfseries\small,
  fontupper=\small\ttfamily,
  title=#1,
  breakable,
  enhanced,
  boxsep=2pt, left=4pt, right=4pt, top=2pt, bottom=2pt,
  before skip=4pt, after skip=4pt,
}

\newtheorem{theorem}{Theorem}[section]

\newtheorem{proposition}[theorem]{Proposition}

\newtheorem{corollary}[theorem]{Corollary}

\hypersetup{
  colorlinks=true,
  linkcolor=red,
  citecolor=teal,
  urlcolor=magenta
}
\usepackage{mdframed}

\surroundwithmdframed[
  backgroundcolor=gray!5,
  linecolor=gray!50,
  linewidth=0.5pt,
  innertopmargin=6pt,
  innerbottommargin=6pt
]{proposition}

\surroundwithmdframed[
  backgroundcolor=gray!5,
  linecolor=gray!50,
  linewidth=0.5pt,
  innertopmargin=6pt,
  innerbottommargin=6pt
]{corollary}

\title{Drifting Objectives for Refining \\Discrete Diffusion Language Models}

\author{%
  Daisuke Oba\textsuperscript{1} \quad
  Hiroki Furuta\textsuperscript\quad
  Naoaki Okazaki\textsuperscript{1,2,3}\\
  \textsuperscript{1}Institute of Science Tokyo \quad
  \textsuperscript{2}AIST \quad
  \textsuperscript{3}NII LLMC\\
  \texttt{\{daisuke.oba@nlp.,okazaki@\}comp.isct.ac.jp} 
}

\begin{document}

\maketitle

\begin{abstract}
Discrete diffusion language models (DDLMs) generate text by iteratively denoising categorical token sequences, while recent drifting methods for continuous generators suggest that part of this sampling-time correction can instead be absorbed into training through an anti-symmetric fixed-point objective. We study how to transfer this principle to DDLMs, where the main challenge is the interface with discrete text: hard token samples are non-differentiable, and categorical predictions do not directly provide continuous samples to drift. 
We formulate \method, a drifting objective that lifts categorical predictions to soft-token features, applies anti-symmetric drifting in a frozen semantic space, and backpropagates the resulting stop-gradient feature target to DDLM logits.
In controlled continual-training experiments with masked and uniform-state diffusion backbones, \method\ improves fixed-NFE generation quality over matched continuation baselines, reducing Gen.-PPL at 4 NFEs by \(89\%\) on MDLM and \(86\%\) on DUO.
These results suggest that drifting can provide a practical refinement objective for DDLMs.
\par\smallskip{\centering Project page: \url{https://daioba.github.io/tokendrift/}\par}
\end{abstract}

\section{Introduction}
Discrete diffusion language models (DDLMs) provide a non-autoregressive alternative to left-to-right generation by iteratively denoising corrupted token sequences~\cite{austin2021structured,sedd,sahoo2024simple,shi2024simplified,duo,nie2025llada,ye2025dream,llada15,duo2,sahoo2026scaling}. 
Their practical behavior is therefore governed not only by the learned denoiser, but also by the quality achieved under a fixed number of denoising steps. 
While many works improve sampling by designing new schedules, samplers, or distilled generators, we ask a complementary question: \emph{can the training objective itself refine an existing DDLM so that the same sampler produces better samples at the same inference budget?}

Recent drifting-based methods for continuous generative models offer a promising training-side perspective for this question~\cite{drifting}. 
They replace part of the iterative correction normally performed during sampling with a fixed-point training objective (Fig.~\ref{fig:abstractive}; left): generated samples are moved along an attraction--repulsion field, toward nearby data samples and away from nearby model samples. 
This is appealing for DDLM refinement because it targets the generated samples themselves, rather than only the reconstruction of corrupted tokens, and therefore provides a direct objective for improving sample quality under a fixed inference budget.

However, transferring drifting to DDLMs is not a direct substitution. 
In continuous drifting, generated samples or their feature representations can be nudged along a drift direction and used as stop-gradient training targets. 
For text, the generator instead produces {categorical distributions} over tokens. 
If these distributions are collapsed to \textit{hard tokens} before feature extraction, the resulting feature-space loss no longer provides a useful gradient to the model logits. 
Thus, applying drifting to DDLMs requires a differentiable bridge from categorical predictions to the continuous feature space where the drift is defined.

\begin{figure}[h]
    \centering
    \includegraphics[width=0.9\linewidth]{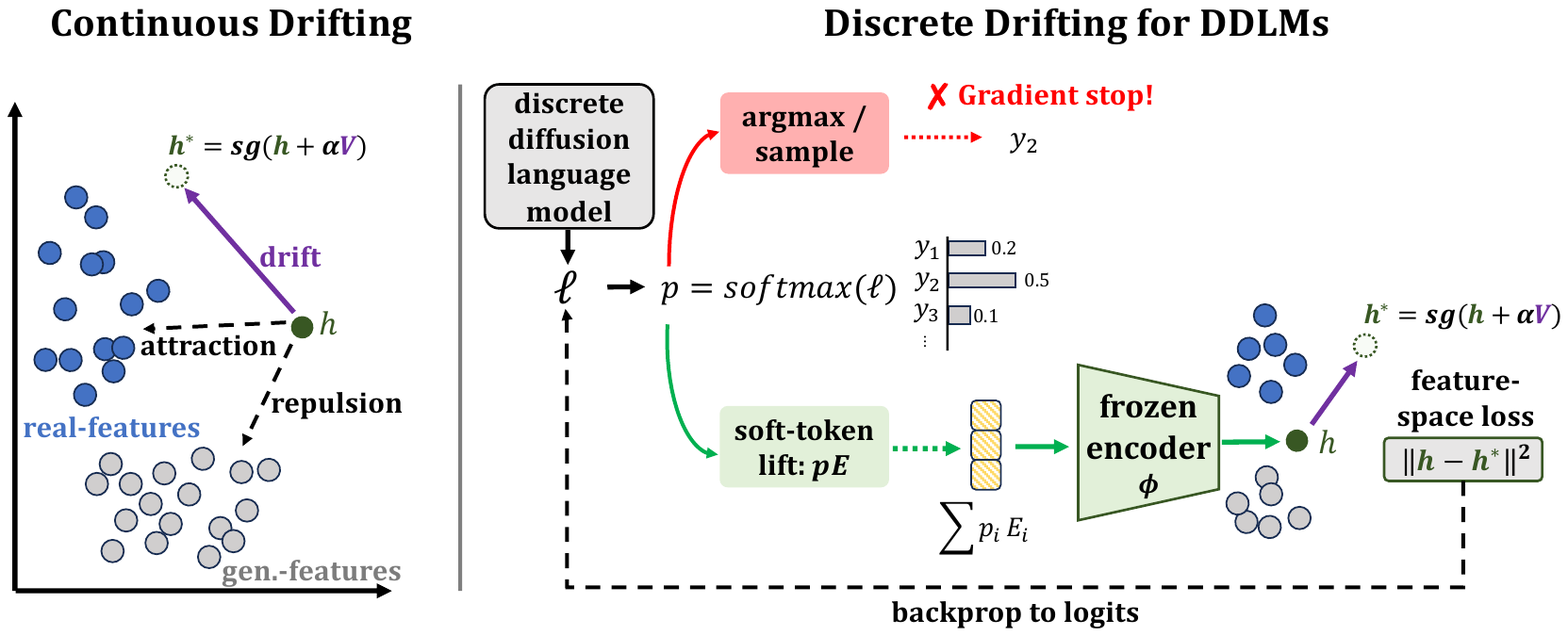}
    \caption{
    \textbf{Overview of our drifting formulation for discrete diffusion language models.}
    Original drifting constructs \textcolor{PineGreen}{a stop-gradient target $h^\star$} by moving \textcolor{PineGreen}{the generated feature $h$} along a \textcolor{purple}{drift field $V$}.
    For discrete text, hard token sampling {blocks gradients}, so we lift token probabilities to \textcolor{Goldenrod}{soft embeddings}, compute the drift target in feature space, and backpropagate the loss to logits.}
    \label{fig:abstractive}
\end{figure}

We formulate \method, a drifting objective for refining DDLMs through soft-token features (Fig.~\ref{fig:abstractive}; right). 
Rather than sampling hard tokens, \method\ feeds the frozen semantic encoder with expected token embeddings computed from the model's categorical predictions. 
This creates a differentiable path from the feature-space drifting loss back to the DDLM logits. 
In that feature space, we estimate the same attraction--repulsion drift as in continuous drifting and train the generator toward a stop-gradient drifted feature target.

A central part of our study is formulation. 
There are several plausible ways to connect a feature-space drift to a categorical generator: one can match the drifted target directly in feature space, convert the drift into a mirror teacher in logit space, or combine either objective with the original denoising loss. 
We compare these alternatives under matched budgets in Section~\ref{sec:experiments:formulation},
and find that direct feature-space drifting provides the most stable trade-off between likelihood-based quality and entropy. 
We further show that the soft-token lift is essential: replacing it with a hard straight-through token surrogate~\cite{bengio2013estimating} severely degrades generation quality, indicating that preserving predictive uncertainty in the semantic feature space is important for effective drifting.

In controlled continual-training experiments on OpenWebText (OWT)~\cite{owt}, \method\ substantially improves fixed-NFE generation quality over both a pretrained masked diffusion language model, MDLM~\cite{sahoo2024simple}, and ordinary continuation training. 
The same objective also improves a uniform-state discrete diffusion backbone, DUO~\cite{duo}, suggesting that the effect is not specific to masked diffusion. 
Together, these results position \method\ as a refinement objective for existing DDLMs: rather than designing a new sampler or distilling a low-NFE student, we keep the model class and sampler fixed and improve the generator through the training objective.

\textbf{Contributions.}
In summary, we formulate drifting as a trainable refinement objective for DDLMs, identify the design choices needed to make it effective for discrete text, and show controlled improvements on both masked and uniform-state diffusion language models.

\section{Preliminaries}
\label{sec:preliminaries}
We briefly review the ingredients needed to formulate drifting objectives for refining DDLMs: categorical denoising models, drifting-based fixed-point learning, and the difficulty of applying continuous drifting directly to text.

\subsection{Discrete Diffusion Language Models}

Let $\mathcal V$ be a vocabulary and let $x=(x_1,\dots,x_L)\in\mathcal V^L$ be a token sequence. 
A DDLM parameterized by $\theta$ defines a corruption process over token sequences and trains a denoiser $f_\theta$ to predict clean tokens from corrupted inputs. 
At each denoising step, the model outputs logits
\(
\ell_\theta=f_\theta(x_t)\in\mathbb{R}^{L\times|\mathcal V|},
\)
which induce position-wise categorical distributions
\(
p_{\theta,t}=\mathrm{softmax}(\ell_{\theta,t})\in\Delta^{|\mathcal V|-1}.
\)

Different DDLMs instantiate the corruption process differently: masked diffusion corrupts tokens toward a mask state, while uniform-state diffusion corrupts tokens toward a uniform distribution over the vocabulary. 
In both cases, the denoiser predicts categorical token distributions and generation proceeds by applying the denoiser for a chosen number of steps.

\subsection{Drifting-Based Fixed-Point Learning}
Drifting defines a fixed-point training rule for continuous generators~\cite{drifting}. 
For a generated sample or feature $y$, let
\(
V(y;P_{\mathrm{data}},P_{\mathrm{\theta}})
\)
be a drift field estimated by attraction toward nearby data samples and repulsion from nearby model samples. 
Drifting forms a stop-gradient target
\[
y^\star=\mathrm{sg}\!\left(y+\alpha V(y;P_{\mathrm{data}},P_{\mathrm{\theta}})\right),
\]
and trains the generator to match this target. 
When raw sample-space distances are not meaningful, the same construction can be applied in a frozen feature space.

A key structural property is anti-symmetry: swapping the attractive and repulsive distributions reverses the drift direction. 
Thus, if the model and data distributions coincide, then attraction and repulsion cancel and
\(
V(\cdot;P,P)=0,
\)
so the drift signal vanishes at equilibrium. 
We aim to preserve this fixed-point structure for DDLMs.

\subsection{The Discrete-Text Interface}
Continuous drifting assumes generated samples or features that can be additively shifted and differentiated through. 
DDLMs instead output categorical token distributions: hard tokenization breaks gradients to logits, while direct probability updates must respect the simplex. 
Thus, applying drifting to DDLMs requires a differentiable bridge from categorical predictions to the continuous feature space where drift is defined.

\section{\method}
\label{sec:method}
We formulate drifting as a refinement objective for DDLMs. 
Given categorical token predictions, \method\ lifts them to soft-token features, computes an anti-symmetric attraction--repulsion drift in a frozen semantic space, and trains the generator toward a stop-gradient drifted feature target. 
This makes feature-space drifting differentiable with respect to token logits.

\textbf{Notation.}
We use $i$ for samples, $t$ for token positions, and $v$ for vocabulary indices. 
For sample $i$, the generator outputs logits 
\(\ell_i\!\in\!\mathbb{R}^{L\times|\mathcal V|}\) 
and distributions 
\(p_i\!\!=\!\!\mathrm{softmax}(\ell_i)\!\in\!\mathbb{R}^{L\times|\mathcal V|}\), 
with \(p_{i,t}\in\Delta^{|\mathcal V|-1}\).

\subsection{Soft-Token Feature Lift}
\label{subsec:soft_token_lift}

Let $E\in\mathbb{R}^{|\mathcal V|\times d}$ be the embedding matrix used by a frozen semantic encoder $\phi$, and $x_0^{(i)}$ be a clean sequence and $\bar x^{(i)}$ its corrupted input. 
Given $\bar x^{(i)}$, the generator outputs distributions $p_i$. 
We form encoder inputs by setting $\tilde e_{i,t}=p_{i,t}E$ for predicted positions $t\in\mathcal M_i$ and $\tilde e_{i,t}=E[\bar x^{(i)}_t]$ otherwise, where $\mathcal M_i$ is defined by the underlying diffusion backbone.
The generated feature and the corresponding real feature are
\[
h_i=\phi(\tilde e_i)\in\mathbb{R}^m,
\qquad
u_i=\phi(E[x_0^{(i)}])\in\mathbb{R}^m.
\]
Thus, $h_i$ is the feature of the model-completed sequence, while $u_i$ is the feature of the corresponding clean sequence. 
Because $\tilde e_i$ depends on $\ell_i$ through $p_i$ on predicted positions, feature-space drifting losses can backpropagate to the logits. 
For backbones without observed-token positions, such as uniform-state diffusion, 
we set $\mathcal M_i=\{1,\dots,L\}$ and use soft embeddings at all positions.

\subsection{Anti-Symmetric Drift Estimation}
\label{subsec:drift_estimation}

For each generated feature $h_i$, we build a positive reference set $\mathcal P_i$ from real data features and a negative reference set $\mathcal N_i$ from generated features. 
The drift follows the attraction--repulsion structure of drifting models~\cite{drifting}: positives pull the sample toward the data distribution, while negatives push it away from the current model distribution.

For $u_j\in\mathcal P_i$ and $v_k\in\mathcal N_i$, we compute temperature-scaled affinities
\(s^+_{ij}=-\|h_i-u_j\|_2^2/\tau\) and
\(s^-_{ik}=-\|h_i-v_k\|_2^2/\tau\).
Following the original drifting construction, we jointly normalize positive and negative affinities to obtain weights $W^+_{ij}$ and $W^-_{ik}$.
The corresponding positive and negative barycenters define the temperature-\(\tau\) drift field:
\[
b_i^+ = \sum\nolimits_{j} W^+_{ij}u_j,
\qquad
b_i^- = \sum\nolimits_k W^-_{ik}v_k,
\qquad
V_i^{(\tau)} = b_i^+ - b_i^- .
\]
Thus, the drift points from nearby model features toward nearby data features.

\textbf{Multi-temperature drift.}
We compute $V_i^{(\tau)}$ for each $\tau\in\mathcal T$, normalize each temperature by a scalar batch-level RMS scale, and average:
\[
s^{(\tau)}
=
\sqrt{\operatorname{mean}_i\|V_i^{(\tau)}\|_2^2+\epsilon},
\qquad
V_i
=
\frac{1}{|\mathcal T|}
\sum\nolimits_{\tau\in\mathcal T}
\frac{V_i^{(\tau)}}{s^{(\tau)}} .
\]
This prevents any single temperature scale from dominating the final drift.

\subsection{Feature-Space Fixed-Point Objective}
\label{subsec:feature_objective}

Our main objective is a direct feature-space fixed-point loss. Given the current generated feature $h_i$ and drift $V_i$, we form the stop-gradient target
\(
h_i^\star
=
\mathrm{sg}\!\left(h_i+\alpha V_i\right),
\)
where $\alpha>0$ is a drift scale. The drifting objective is
\[
\mathcal L_{\mathrm{drift}}
=
\frac{1}{2B}
\sum\nolimits_{i=1}^B
\left\|
h_i - h_i^\star
\right\|_2^2.
\]
Because $h_i^\star$ is frozen, the feature-space gradient is proportional to $-V_i$, so minimizing this loss pushes $h_i$ in the drift direction. 
Through the soft-token lift, this feature-space signal backpropagates to the token logits.

\textbf{Relation to the base objective.}
The drifting objective can be used either alone or in combination with the original discrete diffusion training objective. 
Unless otherwise stated, our main method uses the drifting objective as the primary training signal, while the combined variant is evaluated as an ablation (Sec.~\ref{sec:experiments:formulation}).

\subsection{Alternative Formulation via Mirror Teacher}
\label{subsec:mirror_alternative}

As an alternative to direct feature-space matching, we also consider converting the feature-space drift into a token-level teacher distribution. 
Let $h_i=h(\ell_i)$ be the feature induced by the logits. 
We define
\[
g_i=\nabla_{\ell_i}\!\left(h(\ell_i)^\top\mathrm{sg}(V_i)\right),
\qquad
\ell_i^\star=\mathrm{sg}(\ell_i+\eta g_i),
\qquad
p_i^\star=\mathrm{softmax}(\ell_i^\star).
\]
The softmax maps the updated logits back to valid categorical distributions, yielding a simplex-aware mirror teacher.
We evaluate two matching losses on predicted positions $\mathcal M_i$: distributional KL and logit-space MSE:
\[
\mathcal L_{\mathrm{mirror\text{-}KL}}
=
\frac{1}{B}\sum\nolimits_i\sum\nolimits_{t\in\mathcal M_i}
\mathrm{KL}(p_{i,t}^\star\|p_{i,t}),
\qquad
\mathcal L_{\mathrm{mirror\text{-}MSE}}
=
\frac{1}{B}\sum\nolimits_i\sum\nolimits_{t\in\mathcal M_i}
\|\ell_{i,t}^\star-\ell_{i,t}\|_2^2.
\]
We use these mirror objectives as alternative formulations in the study of Sec.~\ref{sec:experiments:formulation}.
Appendix~\ref{app:mirror_theory} provides the corresponding theoretical justification: 
the mirror teacher is the KL-proximal simplex update induced by the logit-space direction (Prop.~\ref{prop:mirror_teacher}), 
this direction locally improves alignment with the feature-space drift (Prop.~\ref{prop:mirror_local_ascent}), 
and the construction preserves the equilibrium property of anti-symmetric drifting (Prop.~\ref{prop:mirror_equilibrium}).

\subsection{Algorithm}
\label{subsec:training_algorithm}

Algo.~\ref{alg:drifting_objective} summarizes one training step of \method, our main feature-space drifting objective. 
This applies to different DDLM backbones; the corruption process and the set of predicted positions $\mathcal M_i$ are backbone-specific. 
Here $\operatorname{gather}(\cdot)$ denotes cross-device feature gathering when enabled.

\begin{algorithm}[H]
\caption{\textbf{One training step of \method}}
\label{alg:drifting_objective}
\small
\begin{algorithmic}[1]
\Require clean mini-batch $x_0^{1:B}$, corruption process $q(\bar x\!\!\mid\!\! x_0)$,  frozen encoder $\phi$, queues $\mathcal Q_{\mathrm{real}},\mathcal Q_{\mathrm{gen}}$
\State sample corrupted inputs $\bar x^{(i)}\sim q(\bar x\mid x_0^{(i)})$ for all $i=1,\dots,B$
\State compute logits $\ell_i=f_\theta(\bar x^{(i)})$ and distributions $p_i=\mathrm{softmax}(\ell_i)$ for all $i$
\State form encoder inputs $\tilde e_i$ using $p_{i,t}E$ for $t\in\mathcal M_i$ and $E[\bar x_t^{(i)}]$ otherwise
\State compute generated and real features $h_i=\phi(\tilde e_i)$ and $u_i=\phi(E[x_0^{(i)}])$ for all $i$
\State form current feature sets $\mathcal U_{\mathrm{cur}}=\operatorname{gather}(\{u_i\}_{i=1}^B)$ and $\mathcal H_{\mathrm{cur}}=\operatorname{gather}(\{h_i\}_{i=1}^B)$
\State build references $\mathcal U=\mathcal U_{\mathrm{cur}}\cup\mathcal Q_{\mathrm{real}}$ and $\mathcal H=\mathcal H_{\mathrm{cur}}\cup\mathcal Q_{\mathrm{gen}}$
\State compute $V_i=\mathrm{Drift}_{\mathcal T}\!\left(h_i,\mathrm{sg}(\mathcal U),\mathrm{sg}(\mathcal H\setminus\{h_i\})\right)$ for all $i$
\State construct feature targets $h_i^\star=\mathrm{sg}(h_i+\alpha V_i)$ for all $i$
\State compute $\mathcal L_{\mathrm{drift}}=\frac{1}{2B}\sum_{i=1}^B\|h_i-h_i^\star\|_2^2$
\State backpropagate through $\mathcal L_{\mathrm{drift}}$ and update $\theta$
\State push detached current features $\mathrm{sg}(\{u_i\}_{i=1}^B)$/$\mathrm{sg}(\{h_i\}_{i=1}^B)$ into $\mathcal Q_{\mathrm{real}}$/$\mathcal Q_{\mathrm{gen}}$, evicting the oldest entries
\end{algorithmic}
\end{algorithm}

\textbf{Inference}
At inference time, \method\ simply uses the original DDLM sampler. 
All drift-related components are training-time only and add no sampling-time cost.

\subsection{Theoretical Properties of the \method\ Objective}
\label{subsec:objective_properties}

We summarize the main properties of the proposed objective; formal statements and proofs are given in Appendix~\ref{sec:theory}. 
First, the soft-token lift makes the feature-space loss trainable for categorical generators (Prop.~\ref{prop:soft_token_lift}). 
Since
\(
\ell_i \rightarrow p_i=\mathrm{softmax}(\ell_i) \rightarrow \tilde e_i=p_iE \rightarrow h_i=\phi(\tilde e_i)
\)
is differentiable on predicted positions, gradients from the drifting loss can backpropagate to the logits. 
Hard token selection (e.g., $\mathrm{argmax}$) would break this path and turn the feature-space target into a non-differentiable signal.

Second, the fixed-point loss does more than match features: it induces the drift-following update prescribed by the computed field (Prop.~\ref{prop:drift_gradient}). 
For the target \(h_i^\star=\mathrm{sg}(h_i+\alpha V_i)\),
\[
\nabla_{h_i}\frac12\|h_i-h_i^\star\|_2^2=-\alpha V_i,
\qquad
\nabla_{\ell_i}\mathcal L_{\mathrm{drift}}
=
-\alpha J_{h_i}(\ell_i)^\top V_i .
\]
Thus, the continuous feature-space drift is pulled back through the soft-token feature map to produce a logit-space update direction for the DDLM.

Third, the objective inherits the equilibrium structure of drifting. 
If the attraction--repulsion field is anti-symmetric, then the drift vanishes when the data and model feature distributions match (Cor.~\ref{cor:equilibrium}).

These properties do not imply global convergence of the nonconvex generator, but they show that the proposed objective is differentiable, drift-following, and fixed-point consistent in the geometry where the drift is defined.


\section{Experiments}
\label{sec:experiments}

We evaluate whether drifting objectives improve discrete diffusion language models under matched training and inference budgets. 
We conduct continual training on OpenWebText (OWT)~\cite{owt} across two DDLM parameterizations: masked diffusion and uniform-state diffusion. 
For each parameterization, we start from a released checkpoint and compare against ordinary continuation training while keeping the backbone, initialization, data, additional compute budget, and inference budget matched. 
Our experiments focus on two questions: whether drifting improves generation quality beyond extra optimization alone, and which discrete adaptation of drifting is most effective.

\subsection{Experimental Setup}
\label{subsec:exp_setup}

We provide full implementation details in Appendix~\ref{app:exp_details}; here we summarize the controlled setup used in the main experiments.

\textbf{Benchmarks and backbones.}
Our main benchmark is OpenWebText (OWT)~\cite{owt}, where we perform continual training from released DDLM checkpoints. 
We consider two discrete diffusion parameterizations: 
masked diffusion, \textbf{MDLM}~\cite{sahoo2024simple}, using the released 170M MDLM checkpoint, and uniform-state diffusion, \textbf{DUO}~\cite{duo}, using the released checkpoint. 
In all controlled comparisons, methods within the same backbone start from the same checkpoint and use the same data, additional training budget, and evaluation protocol.

\textbf{Controlled comparisons.}
For each backbone, we compare the released checkpoint, ordinary continuation training with the original DDLM objective, and \method. 
These baselines isolate whether improvements come from the drifting objective rather than from additional optimization alone. 
We also evaluate key components of drifting formulations (Sec.~\ref{subsec:mirror_alternative}), including mirror-teacher variants, to study which discrete adaptation of drifting is most effective.



\textbf{Drifting implementation.}
\method\ uses a frozen copy of the same pretrained backbone checkpoint as the semantic encoder $\phi$ to compute pooled sequence features.
It estimates an anti-symmetric attraction--repulsion drift from real and generated references built from the current distributed micro-batch and FIFO queues of detached features; unless otherwise stated, the default queue size is $1024$ for both real and generated features.
The generator is trained with the feature-space drifting objective, and the drift is estimated using the multi-temperature scheme of the original drifting method~\cite{drifting} with $\mathcal T=\{0.02,0.05,0.2\}$.

\textbf{Evaluation.}
We focus on unconditional text generation and report generative perplexity (Gen.-PPL) computed by GPT-2 Large~\cite{gpt2}. 
We also report entropy as a diagnostic for diversity and degeneration. 
All main comparisons use matched numbers of function evaluations (NFEs).

\begin{table}[t]
  \centering
  \small
  \setlength{\tabcolsep}{4pt}
\caption{
\textbf{OWT few-step generation results for masked diffusion (MDLM) at ckpt. 13k.}
Non-gray rows are controlled comparisons using the same additional training budget; values are mean$_{\pm\mathrm{SD}}$ over 3 seeds.
Gray rows provide contextual results from distillation methods and are not controlled baselines.
Entropy is reported as a diversity diagnostic, and bold denotes the best Gen.-PPL per NFE.
}
  \begin{tabular}{l @{\,\,\,} r@{\,\,\,}r r@{\,\,\,}r r@{\,\,\,}r}
    \toprule
    & \multicolumn{2}{c}{{$\text{NFE}=4$}}
    & \multicolumn{2}{c}{{$\text{NFE}=8$}}
    & \multicolumn{2}{c}{{$\text{NFE}=16$}} \\
    \cmidrule(lr){2-3}\cmidrule(lr){4-5}\cmidrule(lr){6-7}
    Method
      & {Gen.-PPL}$\downarrow$ & {Entropy}$\uparrow$
      & {Gen.-PPL}$\downarrow$ & {Entropy}$\uparrow$
      & {Gen.-PPL}$\downarrow$ & {Entropy}$\uparrow$ \\
    \midrule
    MDLM~\cite{sahoo2024simple}
      & $1942.33_{\pm 66.80}$ & $5.95_{\pm 0.02}$
      & $799.52_{\pm 24.28}$  & $5.88_{\pm 0.02}$
      & $338.81_{\pm 15.71}$  & $5.82_{\pm 0.02}$ \\
    \midrule
    Cont.\ learning
      & $1943.95_{\pm 86.65}$ & $5.92_{\pm 0.02}$
      & $787.24_{\pm 4.18}$   & $5.85_{\pm 0.01}$
      & $337.85_{\pm 11.45}$  & $5.78_{\pm 0.02}$ \\
    \textbf{\method}
      & $\mathbf{208.87}_{\pm 11.51}$ & $5.41_{\pm 0.01}$
      & $\mathbf{55.04}_{\pm 1.96}$   & $5.30_{\pm 0.02}$
      & $\mathbf{27.35}_{\pm 0.20}$            & $5.16_{\pm 0.02}$  \\
    \midrule
    \textcolor{black!55}{\textit{Distillation}} \\
    \textcolor{black!55}{ ~~ SDTT~\cite{SDTT}}
      & \textcolor{black!55}{$358.67_{\pm 17.14}$} & \textcolor{black!55}{$6.30_{\pm 0.04}$}
      & \textcolor{black!55}{$127.64_{\pm 4.41}$}  & \textcolor{black!55}{$6.41_{\pm 0.02}$}
      & \textcolor{black!55}{$65.91_{\pm 6.91}$}   & \textcolor{black!55}{$6.42_{\pm 0.06}$} \\
    \textcolor{black!55}{ ~~ Di4C~\cite{di4c}}
      & \textcolor{black!55}{$246.89_{\pm 8.98}$} & \textcolor{black!55}{$6.32_{\pm 0.03}$}
      & \textcolor{black!55}{$84.20_{\pm 3.14}$}  & \textcolor{black!55}{$6.41_{\pm 0.00}$}
      & \textcolor{black!55}{$44.66_{\pm 5.19}$}  & \textcolor{black!55}{$6.35_{\pm 0.08}$} \\
    \textcolor{black!55}{ ~~ DiDi-Inst.~\cite{didi-instruct}}
      & \textcolor{black!55}{$357.39_{\pm 18.92}$} & \textcolor{black!55}{$5.24_{\pm 0.04}$}
      & \textcolor{black!55}{$83.65_{\pm 8.94}$}   & \textcolor{black!55}{$4.99_{\pm 0.03}$}
      & \textcolor{black!55}{$33.23_{\pm 2.14}$}   & \textcolor{black!55}{$4.75_{\pm 0.04}$} \\
    \bottomrule
  \end{tabular}
  \label{tab:gen_ppl_entropy}
\end{table}

\begin{table}[t]
  \centering
  \small
  \setlength{\tabcolsep}{4pt}
  \caption{\textbf{OWT few-step generation results for uniform-state diffusion (DUO) at ckpt. 13k.} Non-gray rows are controlled comparisons using the same additional training budget; values are mean$_{\pm\mathrm{SD}}$ over 3 seeds. The gray row provides a contextual DCD reference and is not a controlled baseline. Entropy is reported as a diversity diagnostic, and bold denotes the best Gen.-PPL per NFE.}
  \begin{tabular}{l rr rr rr}
    \toprule
    & \multicolumn{2}{c}{{$\text{NFE}=4$}}
    & \multicolumn{2}{c}{{$\text{NFE}=8$}}
    & \multicolumn{2}{c}{{$\text{NFE}=16$}} \\
    \cmidrule(lr){2-3}\cmidrule(lr){4-5}\cmidrule(lr){6-7}
        Method
      & {Gen.-PPL}$\downarrow$ & {Entropy}$\uparrow$
      & {Gen.-PPL}$\downarrow$ & {Entropy}$\uparrow$
      & {Gen.-PPL}$\downarrow$ & {Entropy}$\uparrow$ \\
    \midrule
    DUO~\cite{duo}
      & $508.96_{\pm 40.38}$ & $5.54_{\pm 0.05}$
      & $196.94_{\pm 10.79}$ & $5.57_{\pm 0.02}$
      & $114.84_{\pm 5.84}$  & $5.58_{\pm 0.02}$ \\
    \midrule
    Cont.\ learning
      & $539.28_{\pm 43.97}$ & $5.54_{\pm 0.05}$
      & $199.31_{\pm 14.61}$ & $5.57_{\pm 0.01}$
      & $114.55_{\pm 3.59}$  & $5.58_{\pm 0.03}$ \\
    \textbf{\method}
      & $\mathbf{76.17}_{\pm 3.18}$ & $5.40_{\pm 0.05}$
      & $\mathbf{54.98}_{\pm 1.40}$ & $5.42_{\pm 0.05}$
      & $\mathbf{47.36}_{\pm 0.94}$ & $5.43_{\pm 0.04}$ \\
    \midrule
    \textcolor{black!55}{\textit{Distillation}:~DCD~\cite{duo}}
      & \textcolor{black!55}{$299.67_{\pm 28.79}$} & \textcolor{black!55}{$5.51_{\pm 0.04}$}
      & \textcolor{black!55}{$123.48_{\pm 1.79}$}  & \textcolor{black!55}{$5.55_{\pm 0.03}$}
      & \textcolor{black!55}{$81.27_{\pm 3.75}$}   & \textcolor{black!55}{$5.54_{\pm 0.02}$} \\
    \bottomrule
  \end{tabular}
  \label{tab:gen_ppl_entropy_duo}
\end{table}

\begin{figure}[t]
    \centering
    \includegraphics[width=0.9\linewidth]{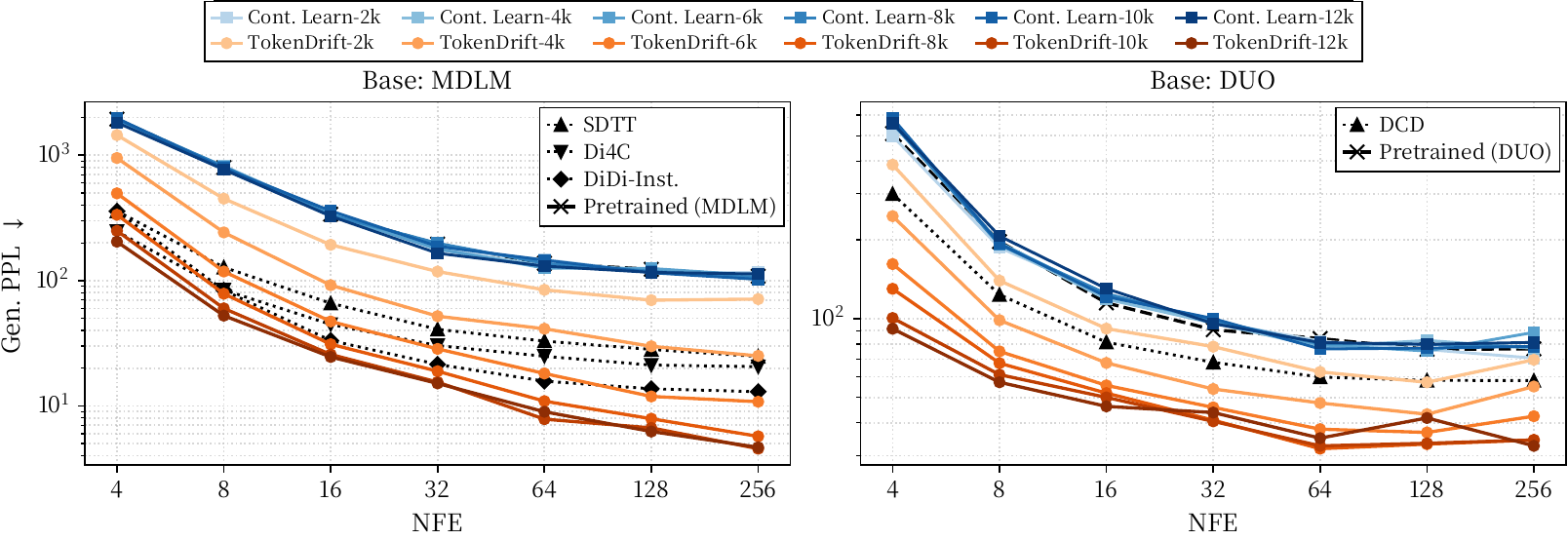}
    \caption{\textbf{Training dynamics from released MDLM~\cite{sahoo2024simple} and DUO~\cite{duo} checkpoints.} As drifting training progresses, Gen.-PPL decreases across the NFEs, showing that our drifting objective progressively improves fixed-budget generation quality rather than merely selecting a better final checkpoint.}
    \label{fig:dynamics}
\end{figure}

\subsection{Main Results}
\label{subsec:results}

\textbf{Masked diffusion.}
Table~\ref{tab:gen_ppl_entropy} reports controlled OWT continual-training results on top of MDLM in the few-step regime, using 4, 8, and 16 NFEs. 
The controlled block keeps the checkpoint, backbone, data, training budget, and evaluation protocol fixed across MDLM, ordinary continuation training, and \method. 
Under this matched setting, \method\ substantially improves Gen.-PPL at every reported NFE. 
Ordinary continuation training barely changes the original checkpoint, indicating that the gains come from the drifting objective rather than extra optimization alone. 
Within this few-step range, entropy decreases moderately but remains far from the severe collapse.

\textbf{Uniform-state diffusion.}
Table~\ref{tab:gen_ppl_entropy_duo} evaluates the same few-step regime on DUO, a uniform-state discrete diffusion backbone. 
Starting from the released DUO checkpoint and using the same additional training budget, \method\ again improves Gen.-PPL over both the original checkpoint and ordinary continuation training at every reported NFE. 
The entropy values remain close to the original DUO and continuation baselines in this range, suggesting that the quality gains do not come from an obvious diversity collapse. 
Together with the MDLM results, this shows that the drifting objective is effective across both masked and uniform-state DDLM parameterizations.

\textbf{Progressive improvement from drifting.}
Figure~\ref{fig:dynamics} tracks Gen.-PPL during the optimization. 
As the drifting objective is optimized, Gen.-PPL decreases across the evaluated NFEs, and the gap to both the original checkpoint and ordinary continuation training widens over time. 
This is the key empirical signal: a drifting loss consistently translates into better token-level sample quality under fixed decoding budgets via \method.

\textbf{Contextual distillation references.}
The gray rows in Tables~\ref{tab:gen_ppl_entropy} and~\ref{tab:gen_ppl_entropy_duo} provide contextual comparisons to specialized distillation methods. 
These methods use different objectives, training pipelines, and sometimes different teachers or students, so they are not controlled baselines for our claims. 
Nevertheless, \method\ achieves lower Gen.-PPL than the reported references at all listed NFEs.

\begin{table}[t]
  \centering
  \small
  \setlength{\tabcolsep}{4pt}
  
  \caption{\textbf{Ablation on the input to the encoder $\phi$ for \method\
           on top of MDLM at ckpt 13k.}
           \emph{Soft} feeds the expected embedding $p_t E$ (continuous mixture over the
           vocabulary), while \emph{Hard} feeds the embedding of the argmax token
           $E[\arg\max p_t]$.
           $^{*}$default configuration used in the main table.}
  \begin{tabular}{l rr rr rr}
    \toprule
    & \multicolumn{2}{c}{{$\text{NFE}=4$}}
    & \multicolumn{2}{c}{{$\text{NFE}=8$}}
    & \multicolumn{2}{c}{{$\text{NFE}=16$}} \\
    \cmidrule(lr){2-3}\cmidrule(lr){4-5}\cmidrule(lr){6-7}
    {Input of $\phi$}
      & {Gen.-PPL$\downarrow$} & {Entropy$\uparrow$}
      & {Gen.-PPL$\downarrow$} & {Entropy$\uparrow$}
      & {Gen.-PPL$\downarrow$} & {Entropy$\uparrow$} \\
    \midrule
    Soft: $p_t E$$^{*}$
      & $\mathbf{208.87}_{\pm 11.51}$ & $5.41_{\pm 0.01}$
      & $\mathbf{55.04}_{\pm 1.96}$   & $5.30_{\pm 0.02}$
      & $\mathbf{27.35}_{\pm 0.20}$            & $5.16_{\pm 0.02}$  \\
    Hard: $y_t^{\mathrm{ST}} E$
      & $6464.88_{\pm 3398.77}$ & $1.56_{\pm 0.03}$
      & $1036.62_{\pm 862.46}$  & $1.37_{\pm 0.12}$
      & $447.63_{\pm 200.38}$   & $1.61_{\pm 0.11}$ \\
    \bottomrule
  \end{tabular}
  \label{tab:phi_input_ablation}
\end{table}

\subsection{Formulation Study: How should Drifting Be Applied to Discrete Diffusion LMs?}\label{sec:experiments:formulation}

\textbf{Soft-token lift is essential.}
We explore whether our soft-token lift can be replaced by a hard-token straight-through estimator~\cite{bengio2013estimating}. 
The soft variant feeds the encoder the expected embedding $p_tE$, while the hard variant uses the argmax embedding in the forward pass and passes gradients through
\[
y_t^{\mathrm{ST}}
=
y_t^{\mathrm{hard}}-\mathrm{sg}(p_t)+p_t.
\qquad
y_t^{\mathrm{hard}}=\mathrm{onehot}(\arg\max_v p_{t,v}).
\]
Table~\ref{tab:phi_input_ablation} shows that, despite this surrogate gradient path, the hard variant performs much worse and collapses entropy. 
Thus, differentiability alone is not sufficient: effective drifting requires exposing the semantic encoder to the model's soft predictive distribution.

\textbf{Target space: feature or logit?}
We compare three drift objectives: direct feature-space L2 matching, 
and its alternatives, i.e., logit-space L2/KL matching through a mirror teacher (Sec.~\ref{subsec:mirror_alternative}). 
Table~\ref{tab:method_ablation} (upper) shows that all three variants are effective, indicating that the main gain comes from the drifting signal itself rather than a single target construction. 
However, they trade off likelihood and entropy differently.
We therefore use feature-space L2 as the default, stable formulation for \method.

\textbf{Interaction with the base denoising objective.}
The lower block of Table~\ref{tab:method_ablation} shows that adding the original MDLM loss is not consistently beneficial. 
For logit-space objectives, the base loss largely cancels the effect of drifting; for feature-space L2, it remains competitive but does not improve over drift-only training. 
This suggests that, in our refinement setting, drifting is not merely a regularizer on top of denoising but can serve as the primary training signal.

\begin{table}[t]
  \centering
  \small
  \setlength{\tabcolsep}{4pt}
  \caption{\textbf{Design choices ablation of \method\ at ckpt 13k.}
           Space: ``\emph{feature}'' compares teacher--student hidden representations;
           ``\emph{logit}'' compares output distributions in the mirror space.
           Dist.: L2/KL means the matching metrics between anchor and teacher point.
           Values are mean$_{\pm \mathrm{SD}}$ over 3 seeds.
           Bold = best Gen.-PPL per NFE.
           $^{*}$default configuration used in the main table.}
  \begin{tabular}{@{}c@{\,\,}c@{\,\,}c rr rr rr@{}}
    \toprule
    \multicolumn{3}{c}{{Configuration}}
    & \multicolumn{2}{c}{{$\text{NFE}=4$}}
    & \multicolumn{2}{c}{{$\text{NFE}=8$}}
    & \multicolumn{2}{c}{{$\text{NFE}=16$}} \\
    \cmidrule(lr){1-3}
    \cmidrule(lr){4-5}\cmidrule(lr){6-7}\cmidrule(lr){8-9}
    {Space} & {Dist.} & {Base Loss}
      & {Gen.-PPL}$\downarrow$ & {Entropy}$\uparrow$
      & {Gen.-PPL}$\downarrow$ & {Entropy}$\uparrow$
      & {Gen.-PPL}$\downarrow$ & {Entropy}$\uparrow$ \\
    \midrule
    Feature$^{*}$ & L2$^{*}$ & w/o$^{*}$
      & $\mathbf{208.87}_{\pm 11.51}$ & $5.41_{\pm 0.01}$
      & $\mathbf{55.04}_{\pm 1.96}$   & $5.30_{\pm 0.02}$
      & $27.35_{\pm 0.20}$            & $5.16_{\pm 0.02}$ \\
    Logit   & L2 & w/o
      & $495.64_{\pm 24.43}$ & $5.84_{\pm 0.02}$
      & $84.66_{\pm 5.53}$   & $5.42_{\pm 0.02}$
      & $26.72_{\pm 1.65}$   & $4.96_{\pm 0.06}$ \\
    Logit   & KL & w/o
      & $295.96_{\pm 16.73}$ & $5.56_{\pm 0.02}$
      & $61.37_{\pm 2.23}$   & $5.20_{\pm 0.04}$
      & $\mathbf{21.70}_{\pm 1.12}$ & $4.75_{\pm 0.07}$ \\
    \midrule
    Feature & L2 & w/
      & $502.75_{\pm 22.49}$ & $5.62_{\pm 0.01}$
      & $96.07_{\pm 6.56}$   & $5.23_{\pm 0.02}$
      & $33.92_{\pm 5.82}$   & $4.91_{\pm 0.02}$ \\
    Logit   & L2 & w/
      & $1962.85_{\pm 52.81}$ & $5.93_{\pm 0.02}$
      & $786.83_{\pm 5.14}$   & $5.85_{\pm 0.01}$
      & $361.85_{\pm 22.07}$  & $5.80_{\pm 0.03}$ \\
    Logit   & KL & w/
      & $1943.95_{\pm 86.65}$ & $5.92_{\pm 0.02}$
      & $787.24_{\pm 4.18}$   & $5.85_{\pm 0.01}$
      & $283.49_{\pm 10.61}$  & $5.75_{\pm 0.02}$ \\
    \bottomrule
  \end{tabular}
  \label{tab:method_ablation}
\end{table}

\begin{table}[t]
  \centering
  \small
  \setlength{\tabcolsep}{4pt}
  \caption{\textbf{Queue size ablation for \method\
           at ckpt 5k.} $|\mathcal{Q}_\text{gen,real}|$ = queue size.
           Values are mean$_{\pm \mathrm{SD}}$ over 3 seeds.
           $^{*}$default configuration used in the main table.}
  \begin{tabular}{c rr rr rr}
    \toprule
    & \multicolumn{2}{c}{{$\text{NFE}=4$}}
    & \multicolumn{2}{c}{{$\text{NFE}=8$}}
    & \multicolumn{2}{c}{{$\text{NFE}=16$}} \\
    \cmidrule(lr){2-3}\cmidrule(lr){4-5}\cmidrule(lr){6-7}
     {$|\mathcal{Q}_\text{gen,real}|$}
      & {Gen.-PPL}$\downarrow$ & {Entropy}$\uparrow$
      & {Gen.-PPL}$\downarrow$ & {Entropy}$\uparrow$
      & {Gen.-PPL}$\downarrow$ & {Entropy}$\uparrow$ \\
    \midrule
       4
      & $1756.70_{\pm 130.91}$ & $6.00_{\pm 0.04}$
      & $428.03_{\pm 27.09}$   & $5.83_{\pm 0.02}$
      & $145.21_{\pm 7.13}$    & $5.67_{\pm 0.03}$ \\
       64
      & $1151.60_{\pm 105.96}$ & $5.88_{\pm 0.04}$
      & $261.55_{\pm 7.82}$    & $5.71_{\pm 0.02}$
      & $96.61_{\pm 5.44}$     & $5.58_{\pm 0.02}$ \\
       1024$^{*}$
      & $\mathbf{729.17}_{\pm 42.40}$ & $5.84_{\pm 0.02}$
      & $\mathbf{166.96}_{\pm 6.77}$  & $5.68_{\pm 0.01}$
      & $\mathbf{67.94}_{\pm 4.07}$   & $5.55_{\pm 0.03}$ \\
    \bottomrule
  \end{tabular}
  \label{tab:queue_size_ablation}
\end{table}

\begin{table}[t]
  \centering
  \small
  \setlength{\tabcolsep}{4pt}
  \caption{\textbf{Ablation for attraction--repulsion ratio (i.e., $b_i^+$ vs.\ $b_i^-$) at ckpt 5k.} 
           Values are mean$_{\pm \mathrm{SD}}$ over 3 seeds.
           $^{*}$default setting carried over from the main table.}
  \begin{tabular}{ll rr rr rr}
    \toprule
    \multicolumn{2}{c}{Ratio}
    & \multicolumn{2}{c}{$\text{NFE}=4$}
    & \multicolumn{2}{c}{$\text{NFE}=8$}
    & \multicolumn{2}{c}{$\text{NFE}=16$} \\
    \cmidrule(lr){1-2}\cmidrule(lr){3-4}\cmidrule(lr){5-6}\cmidrule(lr){7-8}
    $b_i^+$ & $b_i^-$
      & Gen.-PPL$\downarrow$ & Entropy$\uparrow$
      & Gen.-PPL$\downarrow$ & Entropy$\uparrow$
      & Gen.-PPL$\downarrow$ & Entropy$\uparrow$ \\
    \midrule
    1 & 0
      & $726.42_{\pm 30.80}$ & $5.74_{\pm 0.03}$
      & $232.23_{\pm 8.55}$  & $5.66_{\pm 0.01}$
      & $107.14_{\pm 4.17}$  & $5.60_{\pm 0.01}$ \\
    2 & 1
      & $\mathbf{691.49}_{\pm 22.36}$ & $5.82_{\pm 0.02}$
      & $219.42_{\pm 4.31}$  & $5.74_{\pm 0.00}$
      & $107.02_{\pm 3.63}$  & $5.69_{\pm 0.01}$ \\
    \midrule
    0 & 1
      & $17727.67_{\pm 368.66}$ & $6.56_{\pm 0.01}$
      & $20322.75_{\pm 610.23}$ & $6.65_{\pm 0.01}$
      & $19778.34_{\pm 752.82}$ & $6.67_{\pm 0.02}$ \\
    1 & 2
      & $16996.35_{\pm 502.84}$ & $6.53_{\pm 0.01}$
      & $19553.59_{\pm 253.84}$ & $6.62_{\pm 0.01}$
      & $19167.79_{\pm 143.71}$ & $6.64_{\pm 0.01}$ \\
    \midrule
    1$^{*}$ & 1$^{*}$
      & ${729.17}_{\pm 42.40}$ & $5.84_{\pm 0.02}$
      & $\mathbf{166.96}_{\pm 6.77}$  & $5.68_{\pm 0.01}$
      & $\mathbf{67.94}_{\pm 4.07}$   & $5.55_{\pm 0.03}$ \\
    \bottomrule
  \end{tabular}
  \label{tab:att_rep_ablation}
\end{table}

\subsection{Drift Estimation Ablations}
\label{subsec:drift_estimation_ablations}

\textbf{Reference-set size.}
Table~\ref{tab:queue_size_ablation} varies the number of real and generated reference features used to estimate the drift field. 
Gen.-PPL improves consistently as the queue size increases: small queues give noisy drift estimates, while the default queue size of 1024 gives the best result at every reported NFE. 
Entropy stays in a comparable range, suggesting that the gain comes from a better-estimated drift direction rather than from a collapse in output diversity. 
Thus, sufficiently large reference sets are important for making the attraction--repulsion field reliable.

\textbf{Attraction--repulsion balance.}
Table~\ref{tab:att_rep_ablation} tests the anti-symmetric structure inherited from drifting~\cite{drifting}. 
The balanced field \(V_i=b_i^+-b_i^-\), which preserves the feature-space equilibrium property discussed in Corollary~\ref{cor:equilibrium}, performs best across the evaluated NFEs. 
Attraction-only or attraction-heavy variants remain usable but are consistently weaker, while repulsion-only or repulsion-heavy variants fail catastrophically with extremely high Gen.-PPL and inflated entropy. 
This shows that the attraction--repulsion balance is not just a theoretical convenience: it is necessary for a stable and useful drift estimate in discrete diffusion language models.

\subsection{Qualitative Samples}
\label{subsec:qualitative_samples}

\begin{figure}[t]
    \begin{samplebox}[MDLM]{gray}
    Upon takingHerEven so I was under sprinkler cabin and TommyFire set ablaze fire.There wasMy igies bucket here thaif weand my nose in Pac-Pac, my step  ...
    \end{samplebox}
    \begin{samplebox}[\method]{orange}
    The police were asked to register the case on Thursday after forensic examination showed blood and bruises on the body. File photo: A poison victim was found ... 
    \end{samplebox}
    \caption{\textbf{A generated example:}  MDLM (top) and \method\ (bottom) at $\text{NFE}=16$.}
    \label{fig:samples}
\end{figure}

Figure~\ref{fig:samples} shows representative generations from MDLM and \method\ at $\text{NFE}=16$. 
The MDLM sample exhibits the typical failure mode of low-budget diffusion decoding, with fragmented phrases, abrupt topic shifts, and weak local coherence. 
In contrast, \method\ produces a substantially more fluent and document-like continuation with coherent syntax and a plausible news-style structure. 
This qualitative comparison is consistent with the Gen.-PPL improvements in Tables~\ref{tab:gen_ppl_entropy} and~\ref{tab:gen_ppl_entropy_duo}.

\section{Related Work}
\textbf{Distillation for discrete diffusion language models.}
Distillation methods such as SDTT~\cite{SDTT}, Di4C~\cite{di4c}, and DiDi-Instruct~\cite{didi-instruct} train DDLM-based models for low-NFE generation using pretrained teachers. 
They remain closely related to our setting, but solve a different optimization problem: their goal is to distill a teacher into a few-step generator, whereas we refine the same starting DDLM checkpoint with a drifting objective under matched additional training budgets and NFEs. 
We therefore report distillation results as contextual references rather than controlled baselines.

\textbf{Flow-based formulations for discrete sequences.}
Another line of work studies flow-based formulations for discrete or simplex-valued sequence generation~\cite{fmlm,discrete_flow_maps,roos2026categorical,stark2024dirichlet,fsdfm}. 
Rather than refining an existing denoising model and sampler, they define generation through continuous denoising dynamics, flow maps, simplex-valued probability paths, or discrete flow-matching objectives. 
We instead keep the DDLM checkpoint and sampler fixed, and test whether a drifting objective improves the same model under matched training budgets and NFEs.

\section{Conclusion}
\label{sec:conclusion}
We introduced drifting objectives for refining discrete diffusion language models. By lifting categorical predictions to soft-token features, our formulation makes feature-space drifting trainable for DDLMs while preserving the attraction--repulsion structure of continuous drifting. 
Controlled experiments with OWT show that this objective improves fixed-NFE generation quality over matched continuation baselines and transfers beyond masked diffusion to a uniform-state backbone. 
Our results suggest that drifting is a practical training objective for improving existing DDLMs without changing their sampler or relying on specialized distillation.

\textbf{Discussions.}
The main limitation of \method\ is that its effectiveness depends on the quality of the frozen semantic feature space used to estimate drift. 
In addition, our experiments focus on unconditional generation with DDLM backbones; extending the same objective to conditional generation, instruction-following settings, or larger-scale diffusion language models remains an important direction for future work.

\newpage

\section*{Acknowledgement}
This work was partially supported by JSPS KAKENHI Grant Number 25H01137 and JST K Program Japan Grant Number JPMJKP24C3.

\bibliographystyle{unsrtnat}
\bibliography{custom}

\newpage
\appendix

\section{Broader Impact}
\label{app:broader_impact}

This work studies training objectives for improving discrete diffusion language models, and therefore shares the broader impacts of generative language modeling. Improved generation quality at fixed inference budgets may make non-autoregressive text generation more practical and efficient, which could benefit applications requiring lower latency or reduced sampling cost. At the same time, stronger language generators can also be misused to produce misleading, low-quality, or harmful text at scale. Our method does not introduce new data sources, deployment mechanisms, or safety filters, and our experiments are limited to standard unconditional generation benchmarks. Responsible deployment of models trained with drifting objectives should therefore follow existing best practices for language-model evaluation, including bias, toxicity, memorization, and misuse assessments before use in user-facing systems.

\section{Licenses}
\label{app:licenses}

We use publicly available datasets, checkpoints, and evaluation models in accordance with their released licenses and terms. Table~\ref{tab:licenses} summarizes the main external assets used in this work. We do not redistribute the original datasets or pretrained checkpoints; our release will include only our code, configuration files, and instructions for obtaining the external resources from their original providers.

\begin{table}[h]
    \centering
    \setlength{\tabcolsep}{4pt}
    \caption{\textbf{Main external assets used in this work.}}
    \label{tab:licenses}
    \begin{tabular}{lll}
        \toprule
        Asset & Use in this work & License / terms \\
        \midrule
        OpenWebText~\cite{owt} 
        & continual training and evaluation 
        & CC0-1.0 \\

        MDLM~\cite{sahoo2024simple} 
        & released masked-diffusion checkpoint and codebase 
        & Apache-2.0 \\

        DUO~\cite{duo} 
        & released uniform-state diffusion checkpoint and codebase 
        & Apache-2.0 \\

        GPT-2 Large~\cite{gpt2} 
        & Gen.-PPL evaluator 
        & Modified MIT \\

        \bottomrule
    \end{tabular}
\end{table}

\section{Theoretical Analysis of \method}
\label{sec:theory}
We give three results that justify the feature-space drifting objective used by \method. The first shows that the soft-token lift makes drifting differentiable for categorical sequence generators. The second shows that the resulting fixed-point loss directly follows the feature-space drift. The third establishes that the equilibrium property of anti-symmetric drifting is preserved in the discrete setting.
We provide formal statements and proofs for the properties summarized in Sec.~\ref{subsec:objective_properties}.

Throughout this section, we omit the sample index when it is clear from context. For logits $\ell\in\mathbb{R}^{L\times |\mathcal V|}$, we write
\[
    p=\operatorname{softmax}(\ell), 
    \qquad 
    \tilde e = pE,
    \qquad
    h(\ell)=\phi(\tilde e),
\]
where the softmax is applied row-wise over the vocabulary dimension, $E$ is the token embedding matrix, and $\phi$ is a frozen semantic encoder. Given a feature-space drift $V$, the drifting target and objective are
\[
    h^\star = \operatorname{sg}\!\left(h(\ell)+\alpha V\right),
    \qquad
    \mathcal L_{\mathrm{drift}}(\ell;V)
    =
    \frac12
    \left\|h(\ell)-h^\star\right\|_2^2,
\]
where $\alpha>0$ is the drift scale and $\operatorname{sg}(\cdot)$ denotes stop-gradient.

\subsection{Soft-token features make drifting differentiable}
\label{subsec:theory_soft_token}

The first issue in applying drifting to text is that hard token samples are not differentiable with respect to logits. The soft-token lift resolves this by replacing a hard token embedding with the expected embedding under the model distribution.

\begin{proposition}[Differentiable soft-token lift]
\label{prop:soft_token_lift}
Assume that the frozen encoder $\phi$ is differentiable with respect to its input embeddings. Then
\[
    h(\ell)=\phi(\operatorname{softmax}(\ell)E)
\]
is differentiable with respect to $\ell$. Consequently, for any differentiable feature-space objective $R(h)$,
\[
    \nabla_{\ell} R(h(\ell))
    =
    J_h(\ell)^\top \nabla_h R(h),
\]
where $J_h(\ell)$ is the Jacobian of $h$ with respect to $\ell$.
The Jacobian $J_h(\ell)$ pulls the feature-space gradient back to logit space.
\end{proposition}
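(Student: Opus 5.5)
The plan is to write $h$ as a composition of three maps and apply the chain rule:
\[
\ell \;\xrightarrow{\;\operatorname{softmax}\;}\; p \;\xrightarrow{\;p\mapsto pE\;}\; \tilde e \;\xrightarrow{\;\phi\;}\; h .
\]
We will show that each map is differentiable, so the composition is differentiable. The gradient identity is then just the transpose of the chain-rule Jacobian product.

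\textbf{Step 1 (softmax).} The row-wise softmax is smooth on all of $\mathbb{R}^{L\times|\mathcal V|}$. Each entry $p_{t,v}=\exp(\ell_{t,v})/\sum_{w}\exp(\ell_{t,w})$ is a quotient of smooth functions whose denominator is strictly positive. Its Jacobian is block diagonal across positions, with per-position block $\operatorname{diag}(p_t)-p_tp_t^\top$.

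\textbf{Step 2 (embedding lift).} The map $p\mapsto pE$ is linear, hence smooth. Its Jacobian acts position-wise as right multiplication by $E$.

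If some positions are filled with observed embeddings $E[\bar x_t]$, as in Sec.~\ref{subsec:soft_token_lift}, those rows are constant in $\ell$. They contribute zero derivative and do not affect differentiability.

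\textbf{Step 3 (encoder and chain rule).} By hypothesis $\phi$ is differentiable in its input embeddings, so the chain rule gives differentiability of $h$ with
\[
J_h(\ell)=J_\phi(\tilde e)\,J_{E}\,J_{\operatorname{softmax}}(\ell).
\]
For any differentiable $R$, apply the chain rule once more to $R\circ h$:
\[
D(R\circ h)(\ell)=\nabla_hR(h)^\top J_h(\ell).
\]
Transposing gives $\nabla_\ell R(h(\ell))=J_h(\ell)^\top\nabla_hR(h)$. In the drifting case, the stop-gradient target $h^\star$ is treated as a constant, so $R(h)=\tfrac12\|h-h^\star\|^2$ is differentiable in $h$.

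The only point needing care is the regularity of $\phi$. A transformer encoder built from smooth components (linear maps, softmax attention, GELU, layer norm with $\epsilon>0$, pooling) is differentiable. If it contains ReLU-type kinks, the statement holds only almost everywhere. This is why the proposition takes differentiability of $\phi$ as an assumption rather than proving it. Beyond this, the argument is routine.
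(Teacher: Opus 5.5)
Your proof is correct and follows the same route as the paper: write $h$ as the composition of the row-wise softmax, the linear lift $p\mapsto pE$, and the encoder $\phi$, note that each map is differentiable, and then read off $\nabla_\ell R(h(\ell))=J_h(\ell)^\top\nabla_h R(h)$ from the chain rule. Your explicit Jacobian blocks and your remark that observed-token rows are constant in $\ell$ are harmless additions that the paper's proof leaves implicit.
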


\textbf{Proof.}
The row-wise softmax map
\[
\ell \mapsto p=\mathrm{softmax}(\ell)
\]
is differentiable with respect to $\ell$. 
The soft-token embedding map
\[
p \mapsto \tilde e=pE
\]
is linear, and hence differentiable. 
By assumption, the frozen encoder $\phi$ is differentiable with respect to its input embeddings. 
Therefore, the composition
\[
h(\ell)
=
\phi(\mathrm{softmax}(\ell)E)
\]
is differentiable with respect to $\ell$.

Now let $R$ be any differentiable objective defined on the feature representation $h$. 
Since $R(h(\ell))$ is a composition of differentiable maps, the chain rule gives
\[
\nabla_{\ell} R(h(\ell))
=
J_h(\ell)^\top \nabla_h R(h),
\]
where $J_h(\ell)$ is the Jacobian of $h$ with respect to $\ell$. 
Thus, gradients of feature-space objectives can be pulled back to the generator logits through the soft-token lift.
\qed

\textbf{Interpretation.}
Proposition~\ref{prop:soft_token_lift} identifies the key bridge from continuous drifting to discrete text. Although the drift loss is defined in a semantic feature space, the soft-token lift makes this loss differentiable with respect to the generator logits. Thus, the feature-space drift can update a categorical sequence generator by standard backpropagation. In contrast, hard token selection would break this path and turn the drift target into a non-differentiable evaluation signal rather than a trainable objective.

\subsection{Connecting feature-space drifting to logits}
\label{subsec:theory_drift_gradient}

The fixed-point loss below is the same local mechanism used in continuous drifting: a generated feature is trained toward a stop-gradient target shifted by the drift field. The nontrivial point in our setting is that the optimized variables are not continuous samples, but token logits. Using the soft-token lift (Prop.~\ref{prop:soft_token_lift}), we show that the feature-space drifting signal induces a well-defined update direction for the logits of a categorical generator.

\begin{proposition}[Logit-space pullback of the drift]
\label{prop:drift_gradient}
Fix a drift vector $V$ and define
\[
    h^\star=\operatorname{sg}(h+\alpha V),
    \qquad
    \mathcal L_{\mathrm{drift}}
    =
    \frac12\|h-h^\star\|_2^2.
\]
Then the feature-space gradient is
\[
    \nabla_h \mathcal L_{\mathrm{drift}}=-\alpha V.
\]
Moreover, when $h=h(\ell)$ is the soft-token feature map from Proposition~\ref{prop:soft_token_lift},
\[
    \nabla_{\ell}\mathcal L_{\mathrm{drift}}
    =
    -\alpha J_h(\ell)^\top V.
\]
Thus, the continuous drift field is transferred to the categorical generator as a logit-space update through the Jacobian of the soft-token feature map.
\end{proposition}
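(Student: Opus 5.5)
The plan is to make the stop-gradient semantics precise and then apply the chain rule from Proposition~\ref{prop:soft_token_lift}. Formally, $\operatorname{sg}$ means that the target $h^\star$ is treated as a constant during differentiation. Its numerical value equals $h+\alpha V$ at the current point, but no gradient flows through it. So I will fix $c:=h_0+\alpha V$, where $h_0$ is the current feature value and $V$ is held fixed. This matches the hypothesis ``Fix a drift vector $V$'': any dependence of $V$ on $h$ through the affinities is also cut, in accordance with Algorithm~\ref{alg:drifting_objective}, where the references enter through $\operatorname{sg}$. With this convention I study the surrogate function $\tilde{\mathcal L}(h)=\tfrac12\|h-c\|_2^2$.

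First step: compute the feature-space gradient. Since $\tilde{\mathcal L}(h)=\tfrac12\|h-c\|_2^2$, we have $\nabla_h\tilde{\mathcal L}=h-c$. Evaluating at the current point $h=h_0$ gives $h_0-(h_0+\alpha V)=-\alpha V$. This proves the first identity.

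Second step: pass to logits. The map $h=h(\ell)=\phi(\operatorname{softmax}(\ell)E)$ is differentiable by Proposition~\ref{prop:soft_token_lift}, and $\tilde{\mathcal L}$ is a smooth quadratic. So the proposition applies with $R=\tilde{\mathcal L}$, giving $\nabla_\ell\mathcal L_{\mathrm{drift}}=J_h(\ell)^\top\nabla_h\tilde{\mathcal L}$. Substituting the first step yields $-\alpha J_h(\ell)^\top V$. For the full batch objective with its $\tfrac1{2B}$ normalization, the same argument applies per sample. Each $\ell_i$ affects only $h_i$, so one gets $-\tfrac{\alpha}{B}J_{h_i}(\ell_i)^\top V_i$; the statement as written has a single sample and no factor of $B$.

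The only real obstacle is the first step's interpretation of $\operatorname{sg}$. Read naively, $h-h^\star=-\alpha V$ is constant, and one might conclude that the gradient is zero. I will therefore state explicitly that the gradient is taken with $h^\star$ frozen, and that the identity is evaluated at the point where the target was formed. Everything after that is direct.
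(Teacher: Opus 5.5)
Your proposal is correct and matches the paper's proof. The paper likewise treats $h^\star$ as a constant under $\operatorname{sg}$ to get $\nabla_h\mathcal L_{\mathrm{drift}}=h-h^\star=-\alpha V$, then applies the chain rule of Proposition~\ref{prop:soft_token_lift} to obtain $-\alpha J_h(\ell)^\top V$; your frozen constant $c=h_0+\alpha V$ and the $1/B$ factor for the batch loss are just more careful bookkeeping of the same argument. One small attribution slip, moot since the statement fixes $V$: in Algorithm~\ref{alg:drifting_objective}, the anchor $h_i$ enters the drift computation without $\operatorname{sg}$, so it is the outer $\operatorname{sg}$ around the whole target $h_i+\alpha V_i$ that cuts the dependence of $V_i$ on $h_i$, not the $\operatorname{sg}$ on the references.
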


\textbf{Proof.}
Since \(h^\star=\mathrm{sg}(h+\alpha V)\), the target is treated as constant when differentiating with respect to \(h\). Therefore,
\[
\nabla_h \mathcal L_{\mathrm{drift}}
=
\nabla_h \frac12\|h-h^\star\|_2^2
=
h-h^\star
=
h-\mathrm{sg}(h+\alpha V)
=
-\alpha V .
\]
Now suppose \(h=h(\ell)\). By Proposition~\ref{prop:soft_token_lift}, \(h(\ell)\) is differentiable with respect to \(\ell\). Applying the chain rule gives
\[
\nabla_\ell \mathcal L_{\mathrm{drift}}
=
J_h(\ell)^\top \nabla_h \mathcal L_{\mathrm{drift}}
=
-\alpha J_h(\ell)^\top V .
\]
This proves the stated logit-space pullback of the feature-space drift.
\qed

\textbf{Local effect on generated features.}
We next check that pulling the drift back to logits does not destroy its intended feature-space effect.
Let $\ell^+ = \ell-\gamma\nabla_\ell \mathcal L_{\mathrm{drift}}$ be one gradient descent step with step size $\gamma>0$. A first-order expansion gives
\[
    \left\langle h(\ell^+)-h(\ell),\, V\right\rangle
    =
    \gamma\alpha \|J_h(\ell)^\top V\|_2^2
    + O(\gamma^2).
\]
Therefore, for sufficiently small $\gamma$, the logit update moves the generated feature in a direction that has positive alignment with the drift, whenever $J_h(\ell)^\top V\neq 0$.

\textbf{Interpretation.}
The identity $\nabla_h \mathcal L_{\mathrm{drift}}=-\alpha V$ mirrors the fixed-point construction of continuous drifting. The contribution here is the connection to discrete generators: because the feature map is built from soft token predictions, the same drift-following signal can be pulled back to token logits as $-\alpha J_h(\ell)^\top V$. In this sense, the objective is not merely a feature-space matching loss; it is a trainable way to apply continuous drifting dynamics to categorical sequence models.

\subsection{Inherited equilibrium under anti-symmetric drift}
\label{subsec:theory_equilibrium}

The original drifting formulation relies on an anti-symmetric attraction--repulsion field so that the drift vanishes when the data and model distributions match. 
We here verify that the same fixed-point property is inherited by our discrete objective after replacing continuous samples with soft-token features.

Let $P_{\mathrm{data}}^\phi$ denote the distribution of real-text features $u=\phi(E[x])$, and let $P_{\mathrm{model}}^\phi$ denote the distribution of generated soft-token features $h(\ell)=\phi(\mathrm{softmax}(\ell)E)$. Let $\bar V(h;P,Q)$ be the population drift field in this frozen feature space.

\begin{corollary}[No drift signal at feature-space equilibrium]
\label{cor:equilibrium}
Assume the feature-space drift operator is anti-symmetric:
\[
    \bar V(h;P,Q)=-\bar V(h;Q,P)
    \qquad
    \text{for all } h,P,Q.
\]
If $P_{\mathrm{data}}^\phi=P_{\mathrm{model}}^\phi$, then
\[
    \bar V(h;P_{\mathrm{data}}^\phi,P_{\mathrm{model}}^\phi)=0.
\]
Consequently, the stop-gradient target satisfies
\[
    h^\star
    =
    \mathrm{sg}(h+\alpha \bar V)
    =
    \mathrm{sg}(h),
\]
and therefore
\[
    \mathcal L_{\mathrm{drift}}
    =
    \frac12\|h-h^\star\|_2^2
    =
    0,
    \qquad
    \nabla_h\mathcal L_{\mathrm{drift}}=0,
    \qquad
    \nabla_\ell\mathcal L_{\mathrm{drift}}=0.
\]
\end{corollary}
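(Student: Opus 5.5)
The plan is to use anti-symmetry directly, then feed the resulting zero drift into Proposition~\ref{prop:drift_gradient}.

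Write $P:=P_{\mathrm{data}}^\phi=P_{\mathrm{model}}^\phi$. Instantiate the anti-symmetry hypothesis with both arguments equal to $P$. This gives $\bar V(h;P,P)=-\bar V(h;P,P)$, so $2\bar V(h;P,P)=0$ and hence $\bar V(h;P,P)=0$ for every $h$. Substituting the equal distributions back in yields $\bar V(h;P_{\mathrm{data}}^\phi,P_{\mathrm{model}}^\phi)=0$. Since this holds pointwise in $h$, it holds in particular at every generated soft-token feature $h(\ell)$.

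Next, plug $\bar V=0$ into the target. This gives $h^\star=\mathrm{sg}(h+\alpha\cdot 0)=\mathrm{sg}(h)$. Numerically the stop-gradient is the identity on values, so $h-h^\star=0$ and $\mathcal L_{\mathrm{drift}}=\tfrac12\|0\|_2^2=0$.

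For the gradients, I would not differentiate $\tfrac12\|h-\mathrm{sg}(h)\|^2$ naively, which is the one subtle point. Instead I would invoke Proposition~\ref{prop:drift_gradient} with the fixed drift vector $V=\bar V(h;P_{\mathrm{data}}^\phi,P_{\mathrm{model}}^\phi)=0$. This gives $\nabla_h\mathcal L_{\mathrm{drift}}=-\alpha V=0$. Then, via the soft-token lift of Proposition~\ref{prop:soft_token_lift}, it gives $\nabla_\ell\mathcal L_{\mathrm{drift}}=-\alpha J_h(\ell)^\top V=0$.

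The only care needed is to keep the convention from Proposition~\ref{prop:drift_gradient}: $V$ is treated as a constant under $\mathrm{sg}$, so the dependence of the population drift on $P_{\mathrm{model}}^\phi$ (and hence on $\ell$) is not differentiated. With that convention, no real obstacle remains; the argument is a direct substitution.
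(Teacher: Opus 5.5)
Your proposal is correct and takes the same route as the paper's proof: you instantiate anti-symmetry with $Q=P$ to get $2\bar V(h;P,P)=0$, substitute $\bar V=0$ so that $h^\star=\mathrm{sg}(h)$ and the loss vanishes, and obtain both gradients from Proposition~\ref{prop:drift_gradient} with $V=0$. Your remark that $V$ is held fixed under the stop-gradient, so its dependence on $P_{\mathrm{model}}^\phi$ is not differentiated, is the convention the paper uses implicitly.
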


\textbf{Proof.}
Since \(P_{\mathrm{data}}^\phi=P_{\mathrm{model}}^\phi\), write this common feature distribution as \(P\). 
By anti-symmetry,
\[
    \bar V(h;P,P)=-\bar V(h;P,P).
\]
Hence \(2\bar V(h;P,P)=0\), so \(\bar V(h;P,P)=0\). 
Therefore,
\[
    \bar V(h;P_{\mathrm{data}}^\phi,P_{\mathrm{model}}^\phi)=0.
\]

Substituting this into the target definition gives
\[
    h^\star
    =
    \mathrm{sg}(h+\alpha \bar V)
    =
    \mathrm{sg}(h).
\]
Since \(\mathrm{sg}(h)\) has the same value as \(h\), we have
\[
    \mathcal L_{\mathrm{drift}}
    =
    \frac12\|h-\mathrm{sg}(h)\|_2^2
    =
    0.
\]
Moreover, using Proposition~\ref{prop:drift_gradient} with \(V=0\),
\[
    \nabla_h \mathcal L_{\mathrm{drift}}=0,
    \qquad
    \nabla_\ell \mathcal L_{\mathrm{drift}}
    =
    -\alpha J_h(\ell)^\top 0
    =
    0.
\]
Thus, at feature-space equilibrium, the drifting objective injects no feature-space or logit-space learning signal.
\qed

\textbf{Interpretation.}
This result is inherited from the equilibrium property of continuous drifting. Its role is to show that our discrete objective does not introduce a spurious learning signal at feature-space equilibrium. The statement is deliberately made in feature space: matching $P_{\mathrm{data}}^\phi$ and $P_{\mathrm{model}}^\phi$ does not imply equality of logits or token distributions, since the feature map may be many-to-one. What it guarantees is that, in the geometry where the drift is defined, the objective stops pushing once the model-induced feature distribution matches the data-induced feature distribution.

\textbf{Summary.}
Together, these results justify the core design of \method. The soft-token lift makes feature-space drifting differentiable for categorical generators; the fixed-point loss follows the computed drift direction exactly in feature space; and anti-symmetry ensures that the learning signal vanishes at equilibrium. These results do not claim global convergence of the nonconvex generator, but they characterize the local mechanism and fixed-point structure of the proposed objective.

\section{Theoretical Analysis of Mirror-Teacher Objectives}
\label{app:mirror_theory}

We provide the theoretical details for the mirror-teacher alternatives used in Sec.~\ref{subsec:mirror_alternative}. 
The first result shows that the mirror teacher is the KL-proximal simplex update induced by a logit-space direction. 
The second shows that this direction locally improves alignment with the feature-space drift. 
The third verifies that the equilibrium property of anti-symmetric drifting is preserved by the mirror-teacher loss.

Throughout, for a current logit tensor $\ell$, we write
\[
    p = \mathrm{softmax}(\ell),
    \qquad
    g = \nabla_{\ell}\!\left\langle h(\ell), \mathrm{sg}(V)\right\rangle,
    \qquad
    p^\star = \mathrm{softmax}(\mathrm{sg}(\ell+\eta g)),
\]
where $h(\ell)$ is the soft-token semantic feature, $V$ is the feature-space drift, and $\mathrm{sg}(\cdot)$ denotes stop-gradient.

\subsection{Mirror teacher as a simplex-constrained update}
\label{subsec:theory_mirror}

Our mirror teacher has the closed form
\[
    p^\star_v \propto p_v \exp(\eta g_v),
\]
which is standard in mirror descent and exponentiated-gradient updates on the
simplex~\cite{mirror_descent, exponentiated_gradient}.
The following proposition shows that this teacher is not an ad hoc construction:
it is the unique KL-regularized update that moves the current distribution in
direction $g$ while remaining on the simplex.

\begin{proposition}[Variational form of the mirror teacher]
\label{prop:mirror_teacher}
Let $p \in \Delta^{|\mathcal V|-1}$ and $g \in \mathbb{R}^{|\mathcal V|}$.
For any $\eta > 0$, the distribution
\[
    p^\star_v = \frac{p_v \exp(\eta g_v)}{\sum_{u} p_u \exp(\eta g_u)}
\]
is the unique solution of
\[
    \arg\max_{q \in \Delta^{|\mathcal V|-1}}
    \left\{ \langle g, q \rangle - \frac{1}{\eta}\operatorname{KL}(q \,\|\, p) \right\}.
\]
Equivalently, $p^\star = \operatorname{softmax}(\ell + \eta g)$ whenever $p=\operatorname{softmax}(\ell)$.
\end{proposition}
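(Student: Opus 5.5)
The plan is to prove optimality of $p^\star$ through a single Gibbs-variational identity rather than KKT conditions, since the identity delivers existence and uniqueness at once. Write $Z=\sum_u p_u\exp(\eta g_u)$, which is finite and positive because $p$ is a probability vector. First observe that any $q$ with $\operatorname{KL}(q\|p)<\infty$ satisfies $\operatorname{supp}(q)\subseteq\operatorname{supp}(p)$. Every other $q$ has objective value $-\infty$, so we may restrict to the former. Also note that $p^\star$ has exactly the support of $p$.

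The key step is to rewrite the objective for such $q$. Using $\log p^\star_v=\log p_v+\eta g_v-\log Z$ on $\operatorname{supp}(p)$, expand
\[
\operatorname{KL}(q\|p^\star)=\sum_v q_v\log\frac{q_v}{p_v}-\eta\langle g,q\rangle+\log Z ,
\]
where the middle sum uses $\sum_v q_v=1$. Rearranging gives
\[
\langle g,q\rangle-\tfrac1\eta\operatorname{KL}(q\|p)=\tfrac1\eta\log Z-\tfrac1\eta\operatorname{KL}(q\|p^\star).
\]
The first term on the right does not depend on $q$. By Gibbs' inequality, $\operatorname{KL}(q\|p^\star)\ge 0$, with equality if and only if $q=p^\star$. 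Hence $p^\star$ attains the maximum value $\eta^{-1}\log Z$, and it is the unique maximizer.

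For the equivalence, assume $p=\operatorname{softmax}(\ell)$. Then $p_v\exp(\eta g_v)\propto\exp(\ell_v+\eta g_v)$, so normalizing gives $p^\star=\operatorname{softmax}(\ell+\eta g)$. This holds row-wise, matching the teacher defined in Sec.~\ref{subsec:mirror_alternative}.

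The main obstacle is only bookkeeping at the boundary of the simplex, where some $p_v=0$. I will handle it with the conventions $0\log 0=0$ and $\operatorname{KL}=+\infty$ off the support, as in the first step. When $p$ comes from a softmax it lies in the interior, so this case does not arise in our setting.
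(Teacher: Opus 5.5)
Your proof is correct, but it takes a different route from the paper. The paper's proof has three steps:
\begin{itemize}
\item It writes a Lagrangian with a multiplier only for $\sum_v q_v=1$.
\item It solves the stationarity condition to get $q_v=C\,p_v\exp(\eta g_v)$ and normalizes.
\item It invokes strict concavity of $\langle g,q\rangle-\frac1\eta\operatorname{KL}(q\|p)$, assuming $p_v>0$ for all $v$, to conclude that this stationary point is the unique maximizer.
\end{itemize}
You instead verify the candidate directly through the Gibbs variational identity $\langle g,q\rangle-\frac1\eta\operatorname{KL}(q\|p)=\frac1\eta\log Z-\frac1\eta\operatorname{KL}(q\|p^\star)$. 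Optimality and uniqueness then both follow from Gibbs' inequality.

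The paper's derivation has one advantage: it is constructive, discovering the exponential-tilting form rather than presupposing it. However, it leaves implicit why ignoring the constraints $q_v\ge 0$ is harmless; the stationary point happens to be interior, and concavity then makes it global. Its uniqueness step also needs $p$ in the interior of the simplex, which is stronger than the stated hypothesis $p\in\Delta^{|\mathcal V|-1}$.

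Your argument avoids both issues. It needs no constraint qualification and covers boundary $p$ through the support convention. It also gives more information: the optimal value $\eta^{-1}\log Z$ and the exact suboptimality gap $\eta^{-1}\operatorname{KL}(q\|p^\star)$ of every feasible $q$.

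One small wording slip: in your expansion of $\operatorname{KL}(q\|p^\star)$, it is the $\log Z$ term, not the middle term, that uses $\sum_v q_v=1$.
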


\textbf{Proof.}
Consider the Lagrangian
\[
\mathcal J(q,\lambda)
=
\sum_v q_v g_v
-
\frac{1}{\eta}\sum_v q_v\log\frac{q_v}{p_v}
+
\lambda\left(\sum_v q_v-1\right).
\]
Differentiating with respect to \(q_v\) gives
\[
g_v
-
\frac{1}{\eta}
\left(
\log\frac{q_v}{p_v}+1
\right)
+
\lambda
=
0.
\]
Rearranging,
\[
q_v
=
C\,p_v\exp(\eta g_v),
\]
where \(C>0\) is a constant independent of \(v\). Enforcing the simplex constraint \(\sum_v q_v=1\) yields
\[
q_v^\star
=
\frac{p_v\exp(\eta g_v)}
{\sum_u p_u\exp(\eta g_u)}.
\]
Since the objective is linear in \(q\) plus the strictly concave term
\(-\frac{1}{\eta}\operatorname{KL}(q\|p)\), it is strictly concave on the simplex interior whenever \(p_v>0\) for all \(v\). Hence this stationary point is the unique maximizer. Finally, if \(p=\mathrm{softmax}(\ell)\), then
\[
q_v^\star
=
\frac{\exp(\ell_v)\exp(\eta g_v)}
{\sum_u \exp(\ell_u)\exp(\eta g_u)}
=
\mathrm{softmax}(\ell+\eta g)_v.
\]
Therefore \(p^\star=\mathrm{softmax}(\ell+\eta g)\), as claimed.
\qed

\textbf{Interpretation.}
Proposition~\ref{prop:mirror_teacher} formalizes the role of the mirror step in this alternative mirror formulation of \method.
A Euclidean additive update would generally leave the simplex, whereas the mirror
teacher is the KL-proximal update naturally associated with categorical outputs.

\subsection{Feature-space drift induces local semantic improvement}
\label{subsec:theory_local}

The drift field is computed in semantic feature space, but the generator is
updated in logit space.
The next result shows that this mismatch is locally well behaved: 
the logit-space direction used by mirror-formulation of \method\ provably improves alignment with the desired feature-space drift.

Define the local alignment objective
\[
    \Psi(\ell;V) = \left\langle h(\ell), \mathrm{sg}(V)\right\rangle.
\]
By construction,
\[
    g = \nabla_{\ell}\Psi(\ell;V).
\]

\begin{proposition}[Local ascent in semantic alignment]
\label{prop:mirror_local_ascent}
Assume that $\Psi(\ell;V)$ is $L$-smooth in $\ell$ for fixed $V$.
Let
\[
    \tilde{\ell} = \ell + \eta g,
    \qquad
    g = \nabla_{\ell}\Psi(\ell;V).
\]
Then, for any $0 < \eta < 2/L$,
\[
    \Psi(\tilde{\ell};V)
    \ge \Psi(\ell;V) + \eta\left(1-\frac{L\eta}{2}\right)\|g\|_2^2.
\]
In particular, if $g \neq 0$, then the teacher logits $\tilde{\ell}$ strictly
improve semantic alignment with the drift direction for sufficiently small $\eta$.
\end{proposition}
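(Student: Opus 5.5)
The claim is the standard ascent lemma for gradient ascent on an $L$-smooth function, applied to $\Psi(\cdot;V)$ with $V$ held fixed; the stop-gradient on $V$ is what makes $\Psi$ a genuine function of $\ell$ alone. I will use the quadratic lower bound that $L$-smoothness provides. $L$-smoothness means $\nabla_\ell\Psi$ is $L$-Lipschitz. Writing $\Psi(\ell')-\Psi(\ell)=\int_0^1\langle\nabla\Psi(\ell+s(\ell'-\ell)),\ell'-\ell\rangle\,ds$ and subtracting $\langle\nabla\Psi(\ell),\ell'-\ell\rangle$ gives a remainder bounded in absolute value by $\int_0^1 Ls\|\ell'-\ell\|_2^2\,ds=\tfrac{L}{2}\|\ell'-\ell\|_2^2$. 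Hence
\[
\Psi(\ell';V)\ge \Psi(\ell;V)+\langle \nabla_\ell\Psi(\ell;V),\ell'-\ell\rangle-\frac{L}{2}\|\ell'-\ell\|_2^2 .
\]

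Next I substitute $\ell'=\tilde\ell=\ell+\eta g$ with $g=\nabla_\ell\Psi(\ell;V)$. The linear term becomes $\eta\|g\|_2^2$ and the quadratic term becomes $\tfrac{L\eta^2}{2}\|g\|_2^2$. Collecting them gives
\[
\Psi(\tilde\ell;V)\ge\Psi(\ell;V)+\eta\Bigl(1-\tfrac{L\eta}{2}\Bigr)\|g\|_2^2 .
\]
For $0<\eta<2/L$ the factor $\eta(1-L\eta/2)$ is strictly positive, so if $g\neq0$ the improvement is strict. This proves the ``in particular'' clause. Taking $\eta=1/L$ gives the explicit gain $\|g\|_2^2/(2L)$.

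The only point needing care, rather than real difficulty, is the smoothness hypothesis. Here $\Psi(\ell;V)=\langle\phi(\mathrm{softmax}(\ell)E),V\rangle$ is a composition of smooth maps, and Prop.~\ref{prop:soft_token_lift} already gives differentiability. Global $L$-smoothness of $\Psi$ is exactly what the statement assumes, so I will take it as given and not derive it from properties of $\phi$. I will also note that the norm is the Frobenius norm on $\mathbb{R}^{L\times|\mathcal V|}$, with $L$ there denoting sequence length; this should be distinguished from the smoothness constant.
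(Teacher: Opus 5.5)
Your proposal is correct and takes the same route as the paper: apply the $L$-smoothness lower bound $\Psi(\ell+\Delta;V)\ge\Psi(\ell;V)+\langle\nabla_\ell\Psi(\ell;V),\Delta\rangle-\frac{L}{2}\|\Delta\|_2^2$ at $\Delta=\eta g$ and collect terms. The only difference is that you derive this bound from the Lipschitz gradient via the integral remainder, whereas the paper invokes it directly.
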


\textbf{Proof.}
Since $\Psi(\cdot;V)$ is $L$-smooth in $\ell$, for any perturbation $\Delta$,
\[
    \Psi(\ell+\Delta;V)
    \ge
    \Psi(\ell;V)
    +
    \left\langle \nabla_\ell \Psi(\ell;V), \Delta \right\rangle
    -
    \frac{L}{2}\|\Delta\|_2^2 .
\]
Set $\Delta=\eta g$, where $g=\nabla_\ell\Psi(\ell;V)$. Then
\[
\begin{aligned}
    \Psi(\ell+\eta g;V)
    &\ge
    \Psi(\ell;V)
    +
    \eta\left\langle \nabla_\ell\Psi(\ell;V), g \right\rangle
    -
    \frac{L\eta^2}{2}\|g\|_2^2 \\
    &=
    \Psi(\ell;V)
    +
    \eta\|g\|_2^2
    -
    \frac{L\eta^2}{2}\|g\|_2^2 \\
    &=
    \Psi(\ell;V)
    +
    \eta\left(1-\frac{L\eta}{2}\right)\|g\|_2^2 .
\end{aligned}
\]
Since $0<\eta<2/L$, the coefficient
\[
\eta\left(1-\frac{L\eta}{2}\right)
\]
is positive. Therefore, if $g\neq 0$, the teacher logits
$\tilde\ell=\ell+\eta g$ strictly increase $\Psi(\ell;V)$.
\qed

\textbf{Feature-space view.}
Writing $J_h(\ell)$ for the Jacobian of $h$ with respect to $\ell$, we have
\[
    g = J_h(\ell)^\top V.
\]
Hence a first-order expansion gives
\[
    \left\langle h(\ell+\eta g)-h(\ell),\, V \right\rangle
    = \eta \|J_h(\ell)^\top V\|_2^2 + O(\eta^2),
\]
which makes the geometric role of the update explicit: mirror-based formulation of \method\ chooses the
logit-space direction that increases semantic alignment with the feature-space
drift at first order.

\textbf{Interpretation.}
Proposition~\ref{prop:mirror_local_ascent} is the main justification for transporting
drift from feature space to logits.
It shows that the mirror teacher is not merely simplex-valid; it is also locally
aligned with the semantic correction prescribed by the drift field.

\subsection{Equilibrium preservation under anti-symmetric drift}
\label{subsec:theory_mirror_equilibrium}

A central property of drifting is that the learning signal should disappear once
the model and data distributions match~\cite{drifting}.
We now show that this equilibrium property is preserved by mirror-based formulation of \method.

Let $P_{\mathrm{data}}^\phi$ and $P_{\mathrm{model}}^\phi$ denote the
distributions of semantic features induced by the data and the current model
under the frozen encoder $\phi$.
Let $\bar V(h;P,Q)$ denote the population drift field in feature space, where
$P$ supplies the attractive references and $Q$ supplies the repulsive references.

\begin{proposition}[Equilibrium under anti-symmetric drift]
\label{prop:mirror_equilibrium}
Assume that the population drift field is anti-symmetric:
\[
    \bar V(h;P,Q) = -\,\bar V(h;Q,P) \qquad \text{for all } h,P,Q.
\]
Then
\[
    \bar V(h;P,P)=0 \qquad \text{for all } h.
\]
Consequently, if $P_{\mathrm{data}}^\phi = P_{\mathrm{model}}^\phi$, then the
drift term vanishes, the induced logit-space direction satisfies $g=0$, the
mirror teacher reduces to $p^\star=p$, and the drift loss is zero:
\[
    \operatorname{KL}(p^\star \,\|\, p)=0.
\]
\end{proposition}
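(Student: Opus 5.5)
The argument chains three short observations, in the same way as the proof of Corollary~\ref{cor:equilibrium}.

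First, fix $h$ and a distribution $P$. Instantiating the anti-symmetry hypothesis with $Q=P$ gives $\bar V(h;P,P)=-\bar V(h;P,P)$. Hence $2\bar V(h;P,P)=0$, so $\bar V(h;P,P)=0$ for every $h$. When $P_{\mathrm{data}}^\phi=P_{\mathrm{model}}^\phi=:P$, the drift used to build the teacher is therefore $V=\bar V(h(\ell);P,P)=0$.

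Second, we push this through the logit-space direction. Since $V$ enters $\Psi(\ell;V)=\langle h(\ell),\mathrm{sg}(V)\rangle$ only through the stop-gradient, and $\Psi$ is linear in $V$, we get $\Psi(\ell;0)\equiv 0$ as a function of $\ell$. Hence $g=\nabla_\ell\Psi(\ell;0)=0$. Equivalently, using the Jacobian form from the feature-space view, $g=J_h(\ell)^\top V=J_h(\ell)^\top 0=0$. This step needs $h$ to be differentiable so that $J_h(\ell)$ exists, which Proposition~\ref{prop:soft_token_lift} provides.

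Third, we identify the teacher and the loss. With $g=0$ we have $\ell^\star=\mathrm{sg}(\ell+\eta\cdot 0)$, which equals $\ell$ in value, so $p^\star=\mathrm{softmax}(\ell)=p$ row-wise. The same conclusion follows from Proposition~\ref{prop:mirror_teacher}: the closed form $p^\star_v\propto p_v\exp(0)$ normalizes back to $p$. Then $\mathrm{KL}(p^\star\|p)=\mathrm{KL}(p\|p)=0$ at every predicted position $t\in\mathcal M_i$. Summing gives $\mathcal L_{\mathrm{mirror\text{-}KL}}=0$. The same argument gives $\|\ell^\star-\ell\|_2^2=0$, so the MSE variant also vanishes.

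There is no real obstacle. The only point needing care is the same population-versus-estimate caveat as in Corollary~\ref{cor:equilibrium}: the conclusion is about the population field $\bar V$ evaluated at the common feature distribution, not about the finite-queue estimator $V_i$. We state the result accordingly, as the proposition already does.
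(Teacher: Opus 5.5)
Your proposal is correct and follows the paper's proof essentially step for step: instantiate anti-symmetry at $Q=P$ to get $\bar V(h;P,P)=0$, conclude $g=\nabla_\ell\langle h(\ell),0\rangle=0$, and hence $p^\star=\mathrm{softmax}(\ell)=p$ and $\mathrm{KL}(p^\star\|p)=0$. Your extra remarks are fine additions: the MSE variant also vanishes, and the result concerns the population field rather than the finite-queue estimator. One small correction: differentiability of $h$ is not actually needed for $g=0$, since $\Psi(\cdot;0)$ is identically zero.
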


\textbf{Proof.}
Setting \(Q=P\) in the anti-symmetry condition gives
\[
    \bar V(h;P,P)
    =
    -\bar V(h;P,P).
\]
Hence \(2\bar V(h;P,P)=0\), and therefore
\[
    \bar V(h;P,P)=0.
\]

Now suppose \(P_{\mathrm{data}}^\phi=P_{\mathrm{model}}^\phi\). 
Let this common feature distribution be denoted by \(P\). 
Then the population drift used by the mirror-teacher objective satisfies
\[
    V
    =
    \bar V(h;P_{\mathrm{data}}^\phi,P_{\mathrm{model}}^\phi)
    =
    \bar V(h;P,P)
    =
    0.
\]
By the definition of the logit-space direction,
\[
    g
    =
    \nabla_{\ell}
    \left\langle h(\ell),\mathrm{sg}(V)\right\rangle,
\]
and since \(V=0\), we have
\[
    g
    =
    \nabla_{\ell}
    \left\langle h(\ell),0\right\rangle
    =
    0.
\]
Therefore the mirror teacher becomes
\[
    p^\star
    =
    \mathrm{softmax}(\mathrm{sg}(\ell+\eta g))
    =
    \mathrm{softmax}(\mathrm{sg}(\ell)).
\]
The stop-gradient operator does not change the value of its argument, so
\[
    p^\star
    =
    \mathrm{softmax}(\ell)
    =
    p.
\]
Thus the mirror KL loss vanishes:
\[
    \mathrm{KL}(p^\star\|p)
    =
    \mathrm{KL}(p\|p)
    =
    0.
\]
This proves that, at feature-space equilibrium, the mirror-teacher objective injects no drift signal.
\qed

\textbf{Interpretation.}
Proposition~\ref{prop:mirror_equilibrium} shows that the discrete mirror-teacher
construction preserves the fixed-point structure of drifting.
When the model-induced feature distribution matches the data-induced feature
distribution, 
mirror-based formulation of \method\ injects no spurious learning signal.
This is the discrete analogue of the equilibrium property that motivates
drifting in the continuous setting.

\subsection{Summary}
Taken together, the three results establish that mirror-based formulation of \method\ is theoretically well
aligned with its design goals: the teacher is the correct simplex-constrained
update, the chosen direction improves semantic alignment with the drift field at
first order, and the resulting learning signal vanishes at equilibrium.

\section{Additional Related Work}
\label{app:additional_related_work}

\paragraph{Inference-time efficiency for discrete diffusion language models.}
A separate line of work improves the decoding efficiency of discrete diffusion language models at inference time. These methods reduce sampling cost by shortening denoising schedules, reusing or skipping computation across denoising steps, or stopping computation once parts of the sequence have converged~\citep{luxembourg2025plan,israel2025accelerating,wei2025accelerating,ma2025dkv,liu2025dllm,wu2025fast,oba2026stopping}. For example, SureLock~\cite{oba2026stopping} reduces decoding cost by early-stopping token positions whose predictions have stabilized during iterative denoising. These approaches modify the \emph{inference} procedure or computation schedule of an already trained model.

\paragraph{Relation to our work.}
Our work addresses a different part of the pipeline: we study drifting as a training-time objective for refining discrete diffusion language models under matched sampling budgets. The semantic encoder, reference queues, and drift targets used by \method\ are used only during training; at inference time, the underlying DDLM sampler is unchanged. Thus, inference-time acceleration methods are not controlled baselines for our objective-level study. Instead, they represent a distinct direction that improves how a DDLM is sampled, whereas our work improves the model produced before sampling begins.

\section{Additional Experimental Details}
\label{app:exp_details}

\paragraph{Benchmark and continual-training setup.}
We use OpenWebText (OWT)~\cite{owt} for the main continual-training experiments. 
Unless otherwise stated, all methods start from a released checkpoint and are trained for {13k} additional global steps under the same data and compute budget. 
We focus on unconditional text generation, following prior work on diffusion language models~\cite{sahoo2024simple}, and evaluate all methods with matched numbers of function evaluations (NFEs).

\paragraph{Backbones.}
For the masked-diffusion experiments, we use the released 170M-parameter MDLM checkpoint~\cite{sahoo2024simple}\footnote{Released at \url{https://github.com/kuleshov-group/mdlm}.}.
For the uniform-state diffusion experiments, we use the released DUO checkpoint~\cite{duo}\footnote{Released at \url{https://github.com/s-sahoo/duo}.}. 
Within each backbone, all controlled comparisons share the same architecture, tokenizer, initialization, data, additional training budget, and evaluation protocol.

\paragraph{Controlled baselines.}
For each backbone, we compare against two matched baselines. 
The first is the released checkpoint evaluated without additional training. 
The second, denoted \textbf{Continuation}, starts from the same checkpoint and is trained for the same number of additional updates using only the original DDLM objective. 
These baselines isolate the effect of the drifting objective from improvements due merely to additional optimization.

\paragraph{Drifting objective variants.}
Our default method uses the direct feature-space drifting objective described in Sec.~\ref{subsec:feature_objective}. 
We also evaluate variants that add the original denoising objective and mirror-teacher alternatives described in Sec.~\ref{subsec:mirror_alternative}. 
Unless otherwise stated, the default setting uses the drifting objective alone; base-plus-drift and mirror-teacher variants are used for the formulation study.

\paragraph{Optimization and drift hyperparameters.}
The drift scale is fixed to $\alpha=1$. 
We use multi-temperature drift estimation with $\mathcal T=\{0.02,0.05,0.2\}$. 
For each temperature, we compute per-sample drift vectors, normalize them by a scalar batch-level RMS scale, and average the normalized drifts following the original drifting implementation~\cite{drifting}. 
We optimize with AdamW using a global batch size of 512, learning rate $3\times10^{-5}$ or $5\times10^{-5}$. 

\paragraph{Feature encoder.}
For each backbone, the frozen semantic encoder is a frozen copy of the same released checkpoint used to initialize the generator. 
We extract sequence-level features by mean-pooling the penultimate and final hidden layers and L2-normalizing their concatenation. 
Concretely, if $H^{(L-1)},H^{(L)}\in\mathbb{R}^{T\times d}$ are the penultimate and final hidden states, we define
\[
h
=
\operatorname{normalize}
\left(
\left[
\frac{1}{T}\sum_{t=1}^{T}H_t^{(L-1)}
;
\frac{1}{T}\sum_{t=1}^{T}H_t^{(L)}
\right]
\right)
\in\mathbb{R}^{2d}.
\]

\paragraph{Drift reference sets.}
To estimate the drift field efficiently, we build positive and negative reference sets from the current distributed micro-batch and feature queues. 
Each GPU uses a per-device batch size of 2, and training is performed on 4 GPUs of NVIDIA H100 SXM5 94GB, so each micro-step provides 8 fresh examples after cross-device all-gather. 
Generated features are computed from model predictions on corrupted inputs sampled from the underlying diffusion process, and real features are computed from the corresponding clean texts. 
For each anchor, we use the current real and generated features together with separate FIFO queues of detached real and generated features. 
The default queue size is 1024 for both queues. 
The queues store only pooled sequence-level features, not token-level representations. 
We compute $\mathcal L_{\mathrm{drift}}$ at every micro-step and accumulate gradients to reach the global batch size of 512.

\paragraph{Evaluation protocol.}
Following prior work~\cite{sahoo2024simple}, we evaluate generated sample quality using generative perplexity (Gen.-PPL) computed by pretrained GPT-2 Large~\cite{gpt2}. 
We also report entropy as a diagnostic for diversity and degeneration.

\end{document}